\documentclass[letterpaper]{article} 

\usepackage{style}  

\usepackage[hyphens]{url}  
\usepackage{graphicx} 
\usepackage[numbers,sort&compress]{natbib}  
\usepackage{caption} 
\usepackage{booktabs}
\usepackage[table]{xcolor}
\usepackage{amsmath}
\usepackage{amssymb}
\usepackage{amsfonts}
\usepackage{amsthm}
\newtheorem{lemma}{Lemma}
\newtheorem{proposition}{Proposition}
\newtheorem{corollary}{Corollary}

\usepackage[colorlinks=true,linkcolor=blue,citecolor=blue,urlcolor=blue,filecolor=blue]{hyperref}

\title{CANDOR: Chance-Calibrated Discordance in Frozen Foundation Encoders}
\author{
    Soroosh Tayebi Arasteh\corresponding\textsuperscript{\rm 1,2},
    Sven Nebelung\textsuperscript{\rm 1,2},
    Daniel Truhn\textsuperscript{\rm 1,2}
}
\affiliations{
    \textsuperscript{\rm 1}Lab for AI in Medicine, RWTH Aachen University, Aachen, Germany\\
    \textsuperscript{\rm 2}Department of Diagnostic and Interventional Radiology, University Hospital RWTH Aachen, Aachen, Germany\\
    soroosh.arasteh@rwth-aachen.de
}

\begin{document}

\maketitle

\begin{abstract}
Frozen encoders are chosen by how well a lightweight head reads a finding from their features, not whether the geometry separates it. Nearest-neighbor discordance does, but with unequal banks the opposite-label neighbor wins on density, not geometry, so prevalence alone makes an uninformed encoder look blind. We introduce \textbf{CANDOR}, a discordance measure whose equal-size banks are symmetric under a label swap, fixing its chance level at exactly one half. Across 22 encoders, 20 datasets from 7 domains, and 605{,}443 images, this correction reverses the conclusion. Collapse falls below chance almost everywhere, so no encoder is blind, yet all are weak: the best chest model reads pneumothorax at 84.5 AUROC and still places 18.4\% of those positives nearer an opposite-label film than its own kind in the same hospital. The same encoder that resolves bird species at 4.5 leaves chest findings at 42.8 and glaucoma at 49.8, at chance and worse than random weights. Such a case caps the normalized margin of any Lipschitz head, yet some head among eleven is correct on all but 2.8\% of cases where one head misses 35.9\%: the deficit is selection, not information. Erasure retention is associated with collapse; we detect no association with the objective, scale, recency, or size of the finding. Because the chance level is fixed, CANDOR can be read before any head is trained, flagging which findings a frozen encoder supports poorly.
\end{abstract}

\section{Introduction}

\begin{figure*}[t]
\centering
\includegraphics[width=\textwidth]{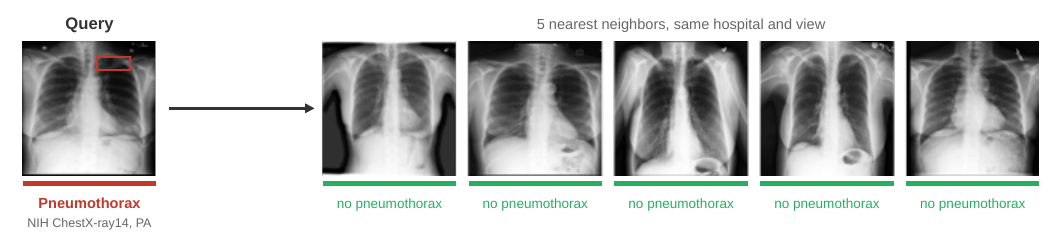}
\caption{A real pneumothorax film (finding outlined) and its five nearest RAD-DINO neighbors, same site and view, all pneumothorax-negative.}
\label{fig:lead}
\end{figure*}

Consider the strongest encoder in our study, RAD-DINO, a foundation model trained on chest radiographs. Take a frontal film with a pneumothorax. In RAD-DINO's feature space, among films from the same hospital and projection, this case sits closer to films with no pneumothorax than to those that share its finding (Figure~\ref{fig:lead}). Its neighborhood leans to its opposite, so any head reading those features is held to a small margin on it (Proposition~\ref{prop:irr}), and on such cases heads err. This happens for 18.4 of every hundred pneumothorax positives, and 27.7 across the twelve chest findings we study, in the best of 22 frozen encoders.

Foundation encoders are chosen and deployed by one number: how well a light head reads a finding off their frozen features, scored by the area under the receiver operating characteristic curve (AUROC). The regime is not hypothetical: general-purpose self-supervised features, trained on natural images and never on a radiograph, are already used for diagnostic tasks \citep{Arasteh2024NonMedicalSsl,Arasteh2025FederatedSsl}. That number governs the choice but cannot see the failure that governs the risk. So the question is not how well the head reads, but: \textit{how much of a fine-grained finding does a frozen encoder put into its geometry, and how would we know?}

Two avenues exist, and both fall short. The first ranks encoders by task performance on the frozen features \citep{Alain2017Probes}. It hides the failure by construction, since a high AUROC is compatible with a large minority of positives sitting nearer their opposite. The second looks at the geometry directly, asking which pairs an encoder cannot tell apart \citep{Tong2024EyesWideShut,Muthyala2026FrozenSmallLesion}. That is the right instinct, but the discordance measure as built is confounded by prevalence: with reference banks of unequal size the opposite-label neighbor wins on \emph{density}, not geometry, so an encoder that knows nothing is scored as collapsed and the reported severity tracks how rare a finding is, not how badly it is encoded. The measure manufactures the phenomenon it reports, worst for the rare findings that matter most.

We do neither. We fix the estimator, and the correction reverses the conclusion. We introduce \textbf{C}hance-c\textbf{A}librated \textbf{N}eighborhood \textbf{D}isc\textbf{OR}dance \textbf{(CANDOR)}, a framework built on one operator: the share of positives an encoder places nearer the opposite label than their own kind, inside a context matched on acquisition. Its one design choice, equal-size reference banks, makes the label groups exchangeable under relabeling, so the \textbf{chance level is exactly one half} for every encoder and finding, by a one-line symmetry argument with no empirical null. Read honestly, discordant collapse sits \emph{below} chance almost everywhere. Frozen encoders are not blind; they encode fine-grained findings weakly, and the weakness is measurable, irreducible, and mappable.

What the correction leaves behind is sharper than what it replaces. With the chance level fixed, CANDOR becomes a map: DINOv3-L places 4.5\% of bird-species positives beside a discordant twin but 42.8\% of chest findings and 49.8\% of glaucoma cases, at chance. The frozen regime is not broadly weak on fine-grained evidence; it fails on a specific kind, and CANDOR says which. The failure is also structural: a discordant twin caps the normalized margin of \emph{any} Lipschitz head on that encoder (Proposition~\ref{prop:irr}), since a wider head buys margin and Lipschitz constant in the same ratio. Yet one door stays open: a rule that selects which encoder to trust per image is not a Lipschitz function of any single geometry. The deficit is not information but selection.
Our contributions are the following:
\begin{itemize}
\item \textbf{A chance-calibrated operator.} We prove equal-size reference banks make the measure exchangeable, fixing its chance level at exactly one half, and show the uncorrected form manufactures collapse in proportion to class rarity. A planted-blind encoder lands on the predicted level.
\item \textbf{An irreducibility bound, and the one escape it leaves.} The Lipschitz bound caps every head's margin on a discordant twin, but a rule that selects an encoder per image escapes it: on the cases a single head misses 35.9\% of, some head among eleven is right on all but 2.8\%.
\item \textbf{A map of the frozen regime, and what predicts collapse.} Across 22 encoders, seven medical and natural domains and 605{,}443 images, the operator resolves bird species and tumor tiles below 8 but leaves chest findings at a median 42.1 and glaucoma at chance. Collapse is associated with whether the representation moves when the evidence is erased (Spearman $0.855$), and with nothing else we tested.
\end{itemize}

\section{Related Work}

\paragraph{Blind pairs and the geometry of representation failure.} \citet{Tong2024EyesWideShut} identify CLIP-blind pairs, images one encoder maps together and another separates. Because the construction is a \emph{difference} between two encoders, it cannot say whether either holds the information, and it has no chance level, so its rates are not comparable across tasks. Concurrent work locates where in the forward pass the signal is lost, showing a frozen classification token discards small-lesion signal in chest radiography while patch-local pooling recovers it \citep{Muthyala2026FrozenSmallLesion}, and a separate line asks the same of the \emph{output}, occluding the region that localizes a finding and testing whether a vision-language model still asserts it \citep{CXRNeedsImage2026}. CANDOR asks it of the representation, before any head is trained: its pairs carry \emph{opposite clinical labels} inside one acquisition context, its chance level is exact, and it caps the margin any Lipschitz head on one encoder can reach on the cases it flags. Neural collapse \citep{Papyan2020NeuralCollapse} concerns the terminal-phase geometry of a \emph{trained} classifier, whereas we measure a frozen encoder failing to separate the classes at all.

\paragraph{Failure detection and selective prediction.} Unreliable predictions are commonly flagged by maximum softmax probability \citep{Hendrycks2017Msp}, energy \citep{Liu2020Energy}, Mahalanobis distance \citep{Lee2018Mahalanobis}, deep-ensemble disagreement \citep{Lakshminarayanan2017DeepEnsembles}, nearest-neighbor distance \citep{Sun2022KnnOod}, and selective classification \citep{Geifman2017Selective}. Each targets an input that is out of distribution or a head that is unsure, and a discordant twin is neither: it lies in distribution and the head is confident. We benchmark a collapse-aware, label-free score against all of them inside the same matched context, and it does not help. We also correct the baseline set, since for a per-target binary head the first three are monotone transforms of one logit: across our 2{,}532 cells, maximum softmax probability and maximum logit return the same AUROC to machine precision, and energy matches to within 0.005. They are one detector and should be reported as one.

\paragraph{Shared representations, and whether failure is shared.} Representations across architectures and objectives appear to be converging \citep{Huh2024Platonic}, and their similarity is routinely quantified \citep{Kornblith2019Cka}. Convergence of representations does not entail convergence of \emph{failure}, and the difference matters: if encoders fail on the same images the deficit lies in the evidence and no ensemble can help. Testing the failure sets against the exact hypergeometric tail, we find they overlap above chance for every encoder pair in every medical domain but for barely half the pairs on natural images, so the sharing is real and partial. Shortcut learning explains a model keying on a confounder \citep{Geirhos2020Shortcut,DeGrave2021Shortcut}, whereas matching within an acquisition context removes that route by construction. Neither coordinated failure across a panel \citep{Arasteh2026CollectiveReliability} nor the gap between what a representation holds and what a deployable rule can extract \citep{Arasteh2026EvidenceStrength} is unique to vision.

\section{The CANDOR Framework}

\subsection{Problem formulation}
Take a chest radiograph positive for pneumothorax and ask a frozen encoder for its five nearest neighbors among films from the same hospital in the same projection. Suppose those neighbors lean, on average, to films with no pneumothorax, nearer than the films that have one. The encoder has placed the case on the wrong side of its own geometry, in a context where nothing but the finding differs. That is what CANDOR measures (Figure~\ref{fig:lead}). An encoder $g:\mathcal{X}\to\mathbb{S}^{d-1}$ maps an image to the unit sphere, frozen, with features $L_2$ normalized so the inner product is the cosine. A head $h:\mathbb{S}^{d-1}\to\mathbb{R}$ reads out a binary finding $F:\mathcal{X}\to\{0,1\}$ by thresholding, $\hat y(z)=\mathbf{1}[h(z)\ge\tau]$, and is the only trained component; adapting $g$ is out of scope. A \emph{context} $s$ partitions $\mathcal{X}$ by acquisition covariates, never by the label: site and projection for a radiograph, source dataset elsewhere, coarse superclass for natural images. Matching within a context forbids separating the classes on scanner, view, or species, forcing the encoder onto the finding, and is only as good as the covariates recorded, the one assumption the data cannot check. Inside a context, fix a positive bank $B^{+}_{s}$ and a negative bank $B^{-}_{s}$ of \emph{equal size} $n$. For a positive query $x$, let $N_k(x;B)$ be its $k$ nearest neighbors in $B$, excluding $x$ where it is a member, and write
\begin{equation}
\begin{aligned}
c^{\pm}(x) \;&=\; \frac{1}{k}\sum_{z\in N_k(x;\,B^{\pm}_{s})}\big\langle g(x),\,g(z)\big\rangle ,\\[2pt]
\mathrm{DM}_g(x) \;&=\; c^{-}(x)-c^{+}(x) .
\end{aligned}
\label{eq:dm}
\end{equation}
When $\mathrm{DM}_g(x)>0$ we say $x$ has a \textbf{discordant twin}: in the same context, its nearest opposite-label neighbors sit closer, on average, than its own kind. The term names this neighborhood condition, not a single matched image. Aggregating over positives gives the \emph{discordance operator}, the one quantity the framework rests on,
\begin{equation}
D(g;F,s)=\Pr\big[\,\mathrm{DM}_g(x)>0 \;\big|\; F(x)=1,\; x\in s\,\big].
\label{eq:operator}
\end{equation}
Intuitively, $D$ is the share of positives an encoder sets beside their own opposite with the context held fixed. It needs no head, no training, and no threshold.

\subsection{The chance level is exactly one half}
\begin{lemma}[Exact chance level]
\label{lem:chance}
Fix a context $s$ and draw $(B^{+}_{s},B^{-}_{s})$ as two disjoint blocks of equal size $n$, uniformly at random from the cases in $s$, with the query excluded from whichever block contains it. Then for any encoder $g$ whose induced similarities are almost surely free of ties,
$\Pr[\mathrm{DM}_g(x)>0]=\tfrac{1}{2}$.
\end{lemma}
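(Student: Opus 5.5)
The plan is a symmetry argument in three steps: exhibit a measure-preserving swap of the two banks, show it negates $\mathrm{DM}_g(x)$, and use tie-freeness to split the probability evenly between the two signs. Throughout, fix the context $s$, the query $x$ and the encoder $g$, and condition on all of them. The only randomness left is the draw of the ordered pair $(B^{+}_{s},B^{-}_{s})$.

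\textbf{Step 1: the swap preserves the law of the banks.} Let $\sigma$ be the map sending $(B^{+}_{s},B^{-}_{s})$ to $(B^{-}_{s},B^{+}_{s})$. Under the hypothesis, the pair is drawn uniformly from ordered pairs of disjoint size-$n$ subsets of $s\setminus\{x\}$. This works because the query is excluded from whichever block contains it, and labels play no role in the draw. The map $\sigma$ is a bijection on this finite set, so it preserves the uniform measure. Hence $(B^{+}_{s},B^{-}_{s})$ and $(B^{-}_{s},B^{+}_{s})$ have the same distribution. This is where equal size $n$ is used: with unequal sizes the swapped pair would lie in a different set of configurations, and exchangeability would fail.

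\textbf{Step 2: the swap negates the margin.} By \eqref{eq:dm}, $c^{+}(x)$ depends on the banks only through $B^{+}_{s}$, and $c^{-}(x)$ only through $B^{-}_{s}$. Both are the same function $c(x;B)=\frac1k\sum_{z\in N_k(x;B)}\langle g(x),g(z)\rangle$ applied to different blocks. Swapping the blocks therefore exchanges $c^{+}$ and $c^{-}$, so $\mathrm{DM}_g(x)\circ\sigma=-\mathrm{DM}_g(x)$. Combined with Step 1, $\mathrm{DM}_g(x)$ and $-\mathrm{DM}_g(x)$ have the same distribution. It follows that $\Pr[\mathrm{DM}_g(x)>0]=\Pr[\mathrm{DM}_g(x)<0]$.

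\textbf{Step 3: no mass at zero.} It remains to show $\Pr[\mathrm{DM}_g(x)=0]=0$. Then the two equal probabilities sum to one, each is $\tfrac12$, and integrating over the query and over $F(x)=1$ keeps the value, giving the lemma and hence $D=\tfrac12$ at chance.

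This step is the main obstacle, because ``free of ties'' must be read strongly enough to do the work. Distinct pairwise similarities alone do not rule out two different $k$-subsets having equal mean similarity. I would therefore interpret the hypothesis as the statement that, almost surely, no two distinct multisets of $k$ similarities to $x$ have equal sums. This holds, for example, when the similarities have a joint law with a density, since each such equality is a measure-zero hyperplane. Because $B^{+}_{s}$ and $B^{-}_{s}$ are disjoint, the two neighbor sets are distinct, so $c^{+}\neq c^{-}$ almost surely.

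If one prefers the weaker reading of the hypothesis, the fallback is a weaker conclusion. Steps 1 and 2 still give $\Pr[\mathrm{DM}_g(x)>0]=\tfrac12\bigl(1-\Pr[\mathrm{DM}_g(x)=0]\bigr)$. This equals $\tfrac12$ exactly once ties are excluded, or once ties are broken by an independent fair coin.
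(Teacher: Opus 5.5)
Your proposal is correct and follows the paper's proof: because the blocks have equal size, the block swap is an involution that preserves the law of the draw, it sends $\mathrm{DM}_g(x)$ to $-\mathrm{DM}_g(x)$, and without ties the two sign events are complementary, so each has probability $\tfrac12$. Your Step 3 is sharper than the paper, which simply asserts the no-tie conclusion, since you rightly note that distinct pairwise similarities do not rule out $c^{+}(x)=c^{-}(x)$. One small misreading is that the stated model draws the blocks from all of $s$, so a block containing $x$ has $n-1$ effective members, rather than from $s\setminus\{x\}$; the swap carries the self-exclusion along, so your argument is unaffected.
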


\begin{proof}
Let $\sigma$ exchange the two blocks. They have equal size, so $\sigma$ is an involution preserving the uniform law on such partitions. Applying $\sigma$ swaps $c^{+}(x)$ with $c^{-}(x)$, self-exclusion included, and therefore sends $\mathrm{DM}_g(x)$ to $-\mathrm{DM}_g(x)$. Hence $\mathbf{1}[\mathrm{DM}_g>0]$ and $\mathbf{1}[\mathrm{DM}_g<0]$ are equal in law, and with no ties they sum to one almost surely. Each has probability one half.
\end{proof}

Write $H_0:\,g(X)\perp F\mid s$ for the null that the representation carries no information about the finding inside a context. Under $H_0$ equal banks make the two label groups exchangeable, so the expected discordance is exactly $\tfrac{1}{2}$ for every encoder and finding. The value is fixed by construction, not estimated. A finite sample scatters around $\tfrac{1}{2}$, so a rate clearly below it is evidence the encoder carries the finding, and the operator needs no empirical null.

\paragraph{Unequal banks manufacture the phenomenon.} Let $|B^{-}_{s}|=m>n=|B^{+}_{s}|$. Then $\sigma$ does not exist and the symmetry is gone. Under exchangeability the banks couple by nesting a size-$n$ sample inside a size-$m$ one, so the top-$k$ similarities to the larger bank dominate those to the smaller pointwise. Hence $c^{-}(x)$ stochastically dominates $c^{+}(x)$ and $\Pr[\mathrm{DM}_g>0]>\tfrac{1}{2}$ \emph{for an encoder that knows nothing about $F$}: the nearest negative wins on density, not geometry. The damage is systematic, since the imbalance $m/n$ is the negative-to-positive odds in the context and grows without bound as the finding gets rarer (its prevalence is $n/(m+n)$). An uncorrected estimator therefore reports the most collapse for the rarest findings, measuring how uncommon a finding is and calling that a property of the encoder.

\subsection{Collapse caps the margin of any Lipschitz head}
\begin{proposition}[Irreducibility]
\label{prop:irr}
Let $h$ be $L$-Lipschitz on $\mathbb{S}^{d-1}$ with decision rule $\mathbf{1}[h(z)\ge\tau]$, and say $h$ separates an opposite-label pair $(z^{+},z^{-})$ with margin $m>0$ if $h(z^{+})\ge\tau+m$ and $h(z^{-})\le\tau-m$. Then
\[
\lVert z^{+}-z^{-}\rVert \;\ge\; \frac{2m}{L}
\qquad\Longleftrightarrow\qquad
\langle z^{+},z^{-}\rangle \;\le\; 1-\frac{2m^{2}}{L^{2}} .
\]
\end{proposition}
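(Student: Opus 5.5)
The plan is to argue directly from the margin condition and the Lipschitz property. The inequality on the left follows from a single chain of estimates, and the equivalence on the right is just the identity relating chord length to inner product on the unit sphere. No earlier result is needed beyond the setup: by the problem formulation, features are $L_2$-normalized, so $z^{\pm}\in\mathbb{S}^{d-1}$.

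First, the margin hypothesis gives $h(z^{+})-h(z^{-})\ge(\tau+m)-(\tau-m)=2m$. Second, $L$-Lipschitzness on $\mathbb{S}^{d-1}$, with respect to the Euclidean metric inherited from $\mathbb{R}^d$, gives $h(z^{+})-h(z^{-})\le|h(z^{+})-h(z^{-})|\le L\lVert z^{+}-z^{-}\rVert$. Chaining the two yields $\lVert z^{+}-z^{-}\rVert\ge 2m/L$. Since $m>0$, this also forces $L>0$, so the division is legitimate. (If $L=0$, then $h$ is constant and no pair can be separated with positive margin, so the hypothesis is vacuous.)

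Third, for the equivalence, I expand the squared chord on the sphere:
\[
\lVert z^{+}-z^{-}\rVert^{2}=\lVert z^{+}\rVert^{2}+\lVert z^{-}\rVert^{2}-2\langle z^{+},z^{-}\rangle=2-2\langle z^{+},z^{-}\rangle .
\]
Both sides of $\lVert z^{+}-z^{-}\rVert\ge 2m/L$ are nonnegative, so squaring preserves the equivalence. This gives $2-2\langle z^{+},z^{-}\rangle\ge 4m^{2}/L^{2}$, which rearranges to $\langle z^{+},z^{-}\rangle\le 1-2m^{2}/L^{2}$.

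The only step requiring any care is fixing the metric in which $h$ is Lipschitz. If it were the geodesic (angular) metric rather than the chordal one, the first inequality would instead read $\arccos\langle z^{+},z^{-}\rangle\ge 2m/L$. The stated cosine form would then not follow verbatim, although it would follow in a weaker form because chord length is at most arc length. I would therefore state explicitly that Lipschitz is meant with respect to the ambient Euclidean norm, which matches the $\lVert\cdot\rVert$ in the statement.

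Finally, I would remark on how the result connects to the discordance operator, without it being part of the proof. The bound is scale-invariant in $(m,L)$: rescaling $h$ by $c$ multiplies both $m$ and $L$ by $c$. So for a discordant twin, where the positive query is cosine-closer to opposite-label points than to its own kind, the normalized margin $m/L$ is capped by $\sqrt{(1-\langle z^{+},z^{-}\rangle)/2}$ for any head on that fixed encoder.
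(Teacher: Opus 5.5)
Your proof is correct and matches the paper's argument: you chain the margin gap $h(z^{+})-h(z^{-})\ge 2m$ with the Lipschitz bound $h(z^{+})-h(z^{-})\le L\lVert z^{+}-z^{-}\rVert$, then pass to the cosine form via $\lVert z^{+}-z^{-}\rVert^{2}=2-2\langle z^{+},z^{-}\rangle$ on the sphere. Your extra remarks spell out details the paper leaves implicit: that $L>0$ is forced, that both sides are nonnegative before squaring, and that the Lipschitz metric is the ambient Euclidean one.
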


\begin{proof}
The margin conditions give $h(z^{+})-h(z^{-})\ge 2m$, and the Lipschitz property gives $h(z^{+})-h(z^{-})\le L\lVert z^{+}-z^{-}\rVert$. Combining yields the first form; on the unit sphere $\lVert z^{+}-z^{-}\rVert^{2}=2-2\langle z^{+},z^{-}\rangle$, which yields the second.
\end{proof}

The same inequality underlies certified robustness \citep{Tsuzuku2018LipschitzMargin}; we run it from the geometry to the achievable margin.

\begin{corollary}[The cap is geometric, not architectural]
\label{cor:cap}
Let $x$ be a positive with $\mathrm{DM}_g(x)>0$. A maximum is at least a mean, so some negative $z^{-}$ in the same context has $\langle g(x),z^{-}\rangle\ge c^{-}(x)$, and every $L$-Lipschitz head separating $x$ from that $z^{-}$ with margin $m$ obeys
\begin{equation}
\frac{m}{L}\ \le\ \tfrac{1}{2}\sqrt{\,2-2\,c^{-}(x)\,}.
\label{eq:cap}
\end{equation}
\end{corollary}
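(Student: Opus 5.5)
The corollary is a direct specialization of Proposition~\ref{prop:irr}. The plan is to produce a concrete opposite-label partner for $x$ whose similarity to $g(x)$ is at least $c^{-}(x)$, and then convert that similarity into a distance bound on the unit sphere.

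The first step picks the partner. By definition, $c^{-}(x)$ in \eqref{eq:dm} is the average of the $k$ inner products $\langle g(x),g(z)\rangle$ over $z\in N_k(x;B^{-}_s)$. A finite average never exceeds its maximum, so some $z_0\in N_k(x;B^{-}_s)$ satisfies $\langle g(x),g(z_0)\rangle\ge c^{-}(x)$. Set $z^{-}=g(z_0)$. It lies in the same context $s$ and carries the opposite label, since it was drawn from the negative bank.

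The second step is the geometry. Write $z^{+}=g(x)$. Both points lie on $\mathbb{S}^{d-1}$, so
\[
\lVert z^{+}-z^{-}\rVert^{2}=2-2\langle z^{+},z^{-}\rangle\le 2-2c^{-}(x).
\]
Taking square roots is legitimate because $c^{-}(x)\le 1$ for unit vectors, so the right-hand side is nonnegative.

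The third step applies Proposition~\ref{prop:irr}. Any $L$-Lipschitz head that separates $(z^{+},z^{-})$ with margin $m$ satisfies $2m/L\le\lVert z^{+}-z^{-}\rVert$. Chaining this with the second step gives $m/L\le\tfrac12\sqrt{2-2c^{-}(x)}$, which is \eqref{eq:cap}.

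No step is a real obstacle. The only point needing care is the role of the hypothesis $\mathrm{DM}_g(x)>0$: the inequality itself uses only $c^{-}(x)$ and does not depend on it. What the hypothesis buys is interpretation. It gives $c^{-}(x)>c^{+}(x)$, so the cap is tighter than the one obtained by pairing $x$ with its typical same-label neighbors, and that is what makes the bound informative about $x$ being discordant. The bound is also invariant to rescaling $h$: multiplying $h$ by $\lambda>0$ multiplies $m$ and $L$ together, so $m/L$ is unchanged. This is the sense in which the cap is "geometric, not architectural," and I would note it in one line.
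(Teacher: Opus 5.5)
Your proof is correct and matches the paper's argument: the paper likewise takes the closest of the $k$ negative neighbors (a maximum is at least the mean $c^{-}(x)$) and feeds that pair into Proposition~\ref{prop:irr}. The only cosmetic difference is that the paper phrases the step through the cosine form $\langle z^{+},z^{-}\rangle\le 1-2m^{2}/L^{2}$, while you use the distance form plus the sphere identity; these give the same bound, since $\tfrac12\sqrt{2-2c}=\sqrt{(1-c)/2}$.
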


Here $m$ and $L$ mean nothing apart: rescaling $h$ by $\alpha>0$ multiplies both, so the only scale-free quantity a head controls is the normalized margin $m/L$, which Eq.~\ref{eq:cap} caps with a number depending on the encoder alone. A wider, deeper, or larger-norm head buys $m$ and $L$ in the same ratio and moves nothing: lifting the margin needs a different geometry, not a different head. The bound caps the achievable margin, not correctness; two distinct embeddings can still be separated, but only at a margin that shrinks as they close, so the head is driven to where it is fragile. What the bound does \emph{not} cover is a rule that selects, per image, which encoder to trust: that is not a Lipschitz function of any single geometry, so Proposition~\ref{prop:irr} is silent on it. That is the only open door, and we measure what lies behind it.

\subsection{Three instruments from the operator}
\paragraph{The blind set.} $\mathcal{B}_F(g)=\{x: F(x)=1,\ \mathrm{DM}_g(x)>0\}$ collects the positives an encoder confuses. If encoders trained on different data with different objectives share a blind set beyond chance, the failure lies in the evidence, not in one model. We test the overlap of $\mathcal{B}_F(g)$ and $\mathcal{B}_F(g')$ against the hypergeometric tail implied by holding both set sizes fixed.

\paragraph{Erasure retention.} The operator asks whether an encoder separates the finding; a second instrument asks whether it ever looked. Where an expert annotated the evidence region $b_F$ of a positive $x$, write $x\ominus b_F$ for the image with that region mean-filled, and set
\begin{equation}
\rho_g(x)=\big\langle\, g(x),\; g(x\ominus b_F)\,\big\rangle .
\label{eq:retention}
\end{equation}
Retention near one means the representation barely moves when the evidence leaves the image, consistent with the finding not being in it. Because $\rho_g$ uses no label and no neighbors, it reads a related but distinct property, the encoder's sensitivity to the evidence region, through a channel independent of the neighbor geometry. An association between the two is therefore not a restatement of one measurement, though it stays correlational and establishes no mechanism on its own.

\paragraph{Neighborhood impurity.} Collapse would be actionable if a case could be flagged without its label. Fix a reference bank on the training split inside $x$'s context, let $\hat p_k(x,F)$ be the fraction of $x$'s $k$ nearest training neighbors that are $F$ positive, and set
\begin{equation}
r(x,F)=1-\big\lvert\,2\hat p_k(x,F)-1\,\big\rvert .
\label{eq:impurity}
\end{equation}
It is high where the neighborhood mixes labels, which is where a discordant twin lives, and costs one forward pass with no label on $x$. We benchmark it against the standard failure-detection set.

\section{Experiments}

\paragraph{Scope.} Table~\ref{tab:scope} names every dataset and encoder. The study covers 22 frozen encoders on 605{,}443 images from 20 public datasets, spanning 7 domains and 5 clinical modalities in seven countries. Chest radiography is the deep case: five sites harmonized to twelve findings across eleven acquisition contexts, an external site, and two sets of expert bounding boxes; the headline pools MIMIC-CXR and CheXpert, the two sites carrying every finding, over 22{,}142 positives. The panel spans 7M to 1.1B parameters, release years 2020 to 2025, and three objectives, self-supervised (SSL), image-text (I-T) and supervised (Sup.), so scale, objective and recency can each be tested. Every encoder is frozen at 224 pixels, so resolution, which governs transfer here \citep{Arasteh2025Dinov3Resolution}, is never a confound.

\begin{table*}[t]
\centering
\caption{The data and the encoder panel. Chest radiography is split into eleven site-by-view acquisition contexts.}
\label{tab:scope}
\setlength{\tabcolsep}{5pt}
\scriptsize
\begin{tabular}{@{}lrp{4.6in}p{0.9in}@{}}
\toprule
Domain & Images & Datasets & Origin \\
\midrule
Chest radiography & 551{,}969 & MIMIC-CXR \citep{Johnson2019Mimic}, CheXpert \citep{Irvin2019Chexpert}, PadChest \citep{Bustos2020Padchest}, VinDr-CXR \citep{Nguyen2022VindrCxr}, VinDr-PCXR \citep{Pham2023VindrPcxr}, Open-I \citep{Demner2016OpenI}, RSNA \citep{Shih2019Rsna}, MS-CXR \citep{Boecking2022MsCxr} & USA, Spain, Vietnam \\
Histopathology & 20{,}000 & NCT-CRC \citep{Kather2018NctCrc}, PatchCamelyon \citep{Veeling2018Pcam} & Germany, Netherlands \\
Bird species & 11{,}788 & CUB-200-2011 \citep{Wah2011Cub} & --- \\
Aircraft variants & 10{,}000 & FGVC-Aircraft \citep{Maji2013Fgvc} & --- \\
Fundus & 6{,}030 & Harvard-FairVision \citep{Luo2024FairVision}, APTOS \citep{Aptos2019}, Messidor \citep{Decenciere2014Messidor}, IDRiD \citep{Porwal2018Idrid} & USA, India, France \\
Dermatology & 4{,}656 & Fitzpatrick17k \citep{Groh2021Fitzpatrick}, ISIC 2019 \citep{Combalia2019Isic}, DDI \citep{Daneshjou2022DDI} & USA, international \\
Mammography & 1{,}000 & VinDr-Mammo \citep{Nguyen2023VindrMammo} & Vietnam \\
\midrule
Encoder family & Count & \multicolumn{2}{@{}l@{}}{Models} \\
\midrule
Chest & 3 & \multicolumn{2}{@{}p{4.58in}@{}}{RAD-DINO \citep{PerezGarcia2025RadDino}, BiomedCLIP \citep{Zhang2024BiomedClip}, DenseNet-121 \citep{Cohen2022Torchxrayvision}} \\
Histopathology & 7 & \multicolumn{2}{@{}p{5.58in}@{}}{Phikon-v2 \citep{Filiot2024PhikonV2}, UNI, UNI2 \citep{Chen2024Uni}, Virchow \citep{Vorontsov2024Virchow}, Virchow2 \citep{Zimmermann2024Virchow2}, Prov-GigaPath \citep{Xu2024ProvGigapath}, CONCH \citep{Lu2024Conch}} \\
Fundus & 2 & \multicolumn{2}{@{}p{4.58in}@{}}{RETFound \citep{Zhou2023Retfound}, FLAIR \citep{SilvaRodriguez2025Flair}} \\
Dermatology & 2 & \multicolumn{2}{@{}p{4.58in}@{}}{MONET \citep{Kim2024Monet}, PanDerm \citep{Yan2025Panderm}} \\
General purpose & 7 & \multicolumn{2}{@{}p{5.58in}@{}}{DINOv3-S/B/L/H+ \citep{Simeoni2025Dinov3}, DINOv2-L \citep{Oquab2024Dinov2}, CLIP-L/14 \citep{Radford2021Clip}, SigLIP-2-L \citep{Tschannen2025Siglip2}} \\
No-skill anchor & 1 & \multicolumn{2}{@{}p{5.58in}@{}}{Randomly initialized ViT-L} \\
\bottomrule
\end{tabular}
\end{table*}

\paragraph{Implementation.} Features are $L_2$ normalized and $k=5$. Each bank is capped at 4{,}000 rows and equalized, the step Lemma~\ref{lem:chance} requires, so a rate on a large site is a subsample estimate, not a census. Heads are a linear readout (primary) and a 256-unit hidden layer, 30 epochs of Adam \citep{Kingma2015Adam} at 0.001, weight decay $10^{-4}$, batch 256, on at most 25{,}000 train and 20{,}000 test rows. Rows are drawn once per pool from a stable hash and shared by every encoder, so every cross-encoder comparison is paired on identical images. The operating point is fixed per finding at sensitivity 80 on validation, the deep ensemble has 5 members, and the seed is 42.

\paragraph{Reporting.} Every collapse rate, retention and AUROC below is a percentage. By Lemma~\ref{lem:chance} the operator has an exact chance level of 50.0, and each collapse rate should be read against it; task AUROC carries its own 50.0 chance level, while retention is a cosine with no such reference and runs near 100 when the encoder ignores the evidence. Where we give an interval it comes from a bootstrap over 1{,}000 resamples clustered on the patient, and each family of $p$-values is corrected for the false discovery rate at 0.05 \citep{Benjamini1995Fdr}.

\paragraph{Baselines.} CANDOR is a measurement framework, and we do not rank models. Two incumbents matter: the finding-level task metric, reported beside the operator to show it cannot see the failure, and, for $r$, the standard failure-detection set (confidence, meaning maximum softmax probability, maximum logit, and energy; deep-ensemble disagreement; Mahalanobis distance; $k$-nearest-neighbor distance), each computed inside the same matched context as $r$.

\paragraph{Reproducibility.} Every dataset is public and every encoder weight is on Hugging Face; the whole pipeline, with pinned library versions, is released at \url{https://github.com/tayebiarasteh/candor}.

% ---------------------------------------------------------------------------
\subsection{The corrected operator reorders the panel}

Table~\ref{tab:panel} reads the chest panel through three instruments. They agree, the first evidence the operator measures what it claims: RAD-DINO is best on all three, the random anchor worst and nearest chance. That is the ordering a working measure must produce, and the one the uncorrected estimator destroys: without equal banks the rate tracks rarity, not encoding.

The level is the result, and Figure~\ref{fig:auroc}a places it against ground truth. An encoder we plant blind to the class-defining evidence scores 49.5, its interval from 43.0 to 56.0 containing the 50.0 that Lemma~\ref{lem:chance} predicts; one we plant to see it scores 0.0. So a blind encoder lands at chance with an interval that covers it. The eleven real encoders fall between, much nearer the blind end. Every one sits below chance, and that none is blind is confirmed by the test below, not by point estimates alone. Even RAD-DINO leaves 27.7 of every hundred positives with a discordant twin from the same hospital and projection, and seven of the ten trained encoders leave more than 40. Corollary~\ref{cor:cap} caps every head's margin on these cases.

The test is the margin classifier: it clears chance in all 132 encoder-finding cells, the bootstrap interval above 50.0 in every one, so the finding is in the geometry and none is blind. The task metric still cannot see the collapse, which persists at the top of its scale: binned by task AUROC, the mean rate falls only from 45.1 to 27.4, and nine of thirteen cells above 75.0 still leave over a quarter collapsed. The best cell, RAD-DINO on pneumothorax at 84.5, collapses 18.4. A model can look strong on the reported metric and place one positive in five beside its opposite.

\begin{table}[tb]
\centering
\caption{Chest panel on MIMIC-CXR and CheXpert. Bold is best, underline second.}
\label{tab:panel}
\setlength{\tabcolsep}{3pt}
\footnotesize
\begin{tabular}{@{}llrccc@{}}
\toprule
& & Params & Standard & \multicolumn{2}{c}{CANDOR} \\
\cmidrule(lr){4-4}\cmidrule(lr){5-6}
Encoder & Obj. & (M) & AUROC$\uparrow$ & Collapse$\downarrow$ & Retention$\downarrow$ \\
\midrule
\multicolumn{6}{@{}l}{\textit{Chest specialists}}\\
RAD-DINO      & SSL  & 86  & \cellcolor{green!22}\textbf{75.3} & \cellcolor{green!22}\textbf{27.7} & \cellcolor{green!22}\textbf{62.2} \\
DenseNet-121  & Sup. & 7   & \cellcolor{green!22}\underline{70.6} & \cellcolor{green!22}\underline{33.8} & \cellcolor{green!22}\underline{84.5} \\
BiomedCLIP    & I-T  & 86  & 68.1 & 37.5 & 88.0 \\
\midrule
\multicolumn{6}{@{}l}{\textit{General purpose}}\\
DINOv3-S      & SSL  & 22  & 64.9 & 40.5 & 94.8 \\
SigLIP-2-L    & I-T  & 307 & 64.3 & 40.6 & 95.3 \\
DINOv3-B      & SSL  & 86  & 65.0 & \cellcolor{red!16}42.1 & 97.2 \\
DINOv3-H+     & SSL  & 632 & 62.3 & \cellcolor{red!16}42.3 & 99.1 \\
CLIP-L/14     & I-T  & 307 & 61.4 & \cellcolor{red!16}42.4 & 96.2 \\
DINOv3-L      & SSL  & 307 & 64.2 & \cellcolor{red!16}42.8 & 98.3 \\
DINOv2-L      & SSL  & 307 & 62.8 & \cellcolor{red!16}42.8 & 98.4 \\
\midrule
\rowcolor{gray!12}
Random ViT-L  & None & 307 & 56.3 & 46.3 & 97.6 \\
\midrule
\multicolumn{3}{@{}l}{\textit{Chance level}} & 50.0 & 50.0 & \\
\bottomrule
\end{tabular}
\end{table}

\begin{figure}[t]
\centering
\includegraphics[width=\columnwidth]{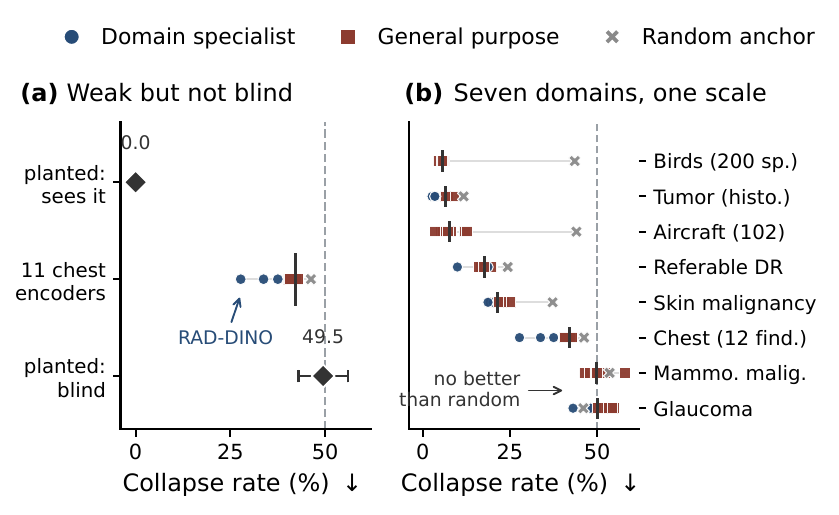}
\caption{\textbf{(a)} The two planted encoders bracket the eleven chest encoders. \textbf{(b)} Collapse per task, one marker per encoder.}
\label{fig:auroc}
\end{figure}

\subsection{Mapping the frozen regime across domains}
With the chance level fixed, Table~\ref{tab:domains} and Figure~\ref{fig:auroc}b turn CANDOR into a map. DINOv3-L, frozen and unchanged, places 4.5 of every hundred bird-species positives and 5.8 of aircraft variants beside a discordant twin, 42.8 of chest findings, and 49.8 of glaucoma cases, at chance. The same encoder, on one scale against one chance level, separates the tasks it solves from those it does not by a factor of ten.

Three regimes separate. Histopathology and natural fine-grained recognition are solved, at 2.7 to 7.7; referable retinopathy and skin malignancy workable, at 9.9 to 21.5; chest findings weak, at 27.7 to 46.3. Glaucoma and mammographic malignancy are not encoded at all, the median encoder scoring 50.0 and 49.9. The anchor sharpens glaucoma: the median encoder is worse than a randomly initialized ViT-L of the same size (50.0 against 46.1), and RETFound reaches only 48.1. The anchor also shows why chance alone is not the whole reading: a random projection reaches 11.7 on histopathology, since coarse image statistics already separate tumor patches. An encoder's learned contribution is its distance from the anchor.

\begin{table*}[tb]
\centering
\caption{Collapse, blind-set overlap, and error detection across the tasks. $\Delta J$ is Jaccard in excess of the hypergeometric null.}
\label{tab:domains}
\setlength{\tabcolsep}{7pt}
\footnotesize
\begin{tabular}{@{}llrrrrrrrr@{}}
\toprule
& & & \multicolumn{3}{c}{Collapse rate $\downarrow$ (chance 50.0)} & \multicolumn{2}{c}{Blind-set overlap} & \multicolumn{2}{c}{Detection $\uparrow$} \\
\cmidrule(lr){4-6}\cmidrule(lr){7-8}\cmidrule(lr){9-10}
Task & Modality & $n^{+}$ & Best & Median & Anchor & $\Delta J$ & Sig. & $r$ & Conf. \\
\midrule
Tumor                 & Histopathology & 1{,}418  & \cellcolor{green!22}\textbf{2.7}  & \cellcolor{green!22}6.6  & 11.7 & $+$27.0 & 100.0 & \cellcolor{red!16}65.9 & \textbf{91.9} \\
Variant (102 classes) & Aircraft (non-medical)       & 8{,}700  & \cellcolor{green!22}\underline{3.5}  & \cellcolor{green!22}7.7  & 44.1 & $+$10.4 & 13.5  & \cellcolor{red!16}60.1 & \textbf{84.6} \\
Species (200 classes) & Birds (non-medical)           & 9{,}837  & \cellcolor{green!22}4.5  & \cellcolor{green!22}\textbf{5.7}  & 43.6 & $+$31.2 & 24.3  & \cellcolor{red!16}41.9 & \textbf{82.6} \\
Referable DR          & Fundus         & 1{,}300  & 9.9  & 17.7 & 24.4 & $+$24.8 & 100.0 & \cellcolor{red!16}63.9 & \textbf{65.7} \\
Malignancy            & Dermatology    & 1{,}208  & 18.7 & 21.5 & 37.3 & $+$24.0 & 100.0 & \cellcolor{red!16}61.8 & \textbf{69.8} \\
12 findings           & Chest X-ray    & 22{,}142 & 27.7 & 42.1 & 46.3 & $+$7.9  & 91.8  & \cellcolor{red!16}52.2 & \textbf{86.8} \\
Glaucoma              & Fundus         & 1{,}016  & \cellcolor{red!16}43.1 & \cellcolor{red!16}50.0 & 46.1 & $+$13.1 & 100.0 & \cellcolor{red!16}55.0 & \textbf{62.9} \\
Malignancy            & Mammography    & 500      & \cellcolor{red!16}46.4 & \cellcolor{red!16}49.9 & 53.6 & $+$11.7 & 100.0 & \cellcolor{red!16}52.3 & \textbf{55.6} \\
\bottomrule
\end{tabular}
\end{table*}

\subsection{Irreducible per encoder but recoverable across the panel}
Proposition~\ref{prop:irr} says a discordant twin caps the normalized margin of any Lipschitz head on that encoder. On a synthetic sweep of class overlap the achieved joint margin never exceeds the bound on any of 5{,}000 test pairs, while the bound itself stays between 4.9 and 7.1: the head fails not because the bound loosened but because the geometry closed. Linear error rises from 0.0 to 26.1 and the shallow network from 0.0 to 28.9, so the extra capacity never wins a cell. The constraint is not capacity but geometry.

Table~\ref{tab:selection} puts the cap in clinical terms across two samples we keep apart. On the full positive set, 8{,}460 of 27{,}443 positives, 30.8 in every hundred, are blind under a majority of the eleven encoders. The head-error numbers use a different sample: an oracle and every single head can only be compared on cases all eleven encoders scored, so they are restricted to the 263 positives in that intersection. On those 263, a single head misses 35.9 on average, up to 77.8 for one finding, against an operating point at 80.0 sensitivity, while a selector shown the label and picking, per image, whichever head is right misses only 2.8, a gap of 33.1 points. The per-finding subsets run from four to fifty-eight cases, so we read the pooled figures as reliable and the per-finding rows as indicative only. Proposition~\ref{prop:irr} predicts this asymmetry and nothing else does: the bound binds any Lipschitz function of one encoder's features, but a rule that selects which encoder to trust per image is not one, and is not bound. The information is in the panel, but reaching it needs the label we are trying to predict. We do not claim ensembling fails, but something narrower: the deficit is selection, not information. However, error on an all-positive set is a miss rate each head's threshold shifts, which is why we compare against a single head at fixed thresholds.

Against the exact hypergeometric null, agreement in medicine is total: all 268 encoder pairs across dermatology, fundus, histopathology, and mammography overlap above chance and remain significant after correction, and on chest 655 of 660 overlap and 606 remain. On natural images the agreement evaporates: only half the pairs clear chance on birds (51.3) and aircraft (50.2), and only 24.3 and 13.5 hold up after correction. Different encoders confuse different birds. They confuse the same chest films.

\begin{table}[tb]
\centering
\caption{Chest consensus blind set. $n^{+}$ and Blind span all positives; Single and Best (of 11) use the shared Scored cases.}
\label{tab:selection}
\setlength{\tabcolsep}{4pt}
\footnotesize
\begin{tabular}{@{}lrrrrr@{}}
\toprule
Finding & $n^{+}$ & Blind & Scored & Single$\downarrow$ & Best$\downarrow$ \\
\midrule
Pleural effusion   & 4{,}941 & 24.9 & 41 & 60.1 & \cellcolor{green!22}7.3  \\
Atelectasis        & 3{,}942 & 30.1 & 36 & 50.5 & \cellcolor{green!22}16.7 \\
Enl.\ cardiomed.   & 657     & 42.3 & 9  & 49.5 & \cellcolor{green!22}0.0  \\
Cardiomegaly       & 3{,}855 & 26.4 & 36 & 49.2 & \cellcolor{green!22}8.3  \\
Edema              & 2{,}385 & 27.5 & 19 & 45.9 & \cellcolor{green!22}0.0  \\
Lung opacity       & 5{,}052 & 39.3 & 58 & 42.5 & \cellcolor{green!22}1.7  \\
Consolidation      & 1{,}208 & 34.1 & 10 & 38.2 & \cellcolor{green!22}0.0  \\
Pneumothorax       & 1{,}183 & 25.6 & 5  & 29.1 & \cellcolor{green!22}0.0  \\
Fracture           & 530     & 30.9 & 4  & 22.7 & \cellcolor{green!22}0.0  \\
Pneumonia          & 2{,}443 & 35.7 & 27 & 20.2 & \cellcolor{green!22}0.0  \\
Lung lesion        & 766     & 31.9 & 8  & 13.6 & \cellcolor{green!22}0.0  \\
Pleural other      & 481     & 24.5 & 10 & 9.1  & \cellcolor{green!22}0.0  \\
\midrule
\rowcolor{gray!12}
Pooled             & 27{,}443 & 30.8 & 263 & 35.9 & \textbf{2.8} \\
\bottomrule
\end{tabular}
\end{table}

\subsection{Erasure retention is associated with collapse}
Erasure retention asks a different question, with no labels and no neighbors: when the evidence is cut out of the image, how far does the representation move? Figure~\ref{fig:mechanism}a shows the two questions have almost the same answer. Over the eleven chest encoders, mean retention and the collapse rate correlate at Spearman $0.855$ ($p=0.0008$), and retention against task AUROC at $-0.782$ ($p=0.0045$). RAD-DINO retains 62.2 of its embedding when the finding is occluded, 43.6 on pneumonia alone, while the random anchor retains 97.6 and so do six of the seven general-purpose encoders. An encoder that barely moves when the finding leaves the image is the one that places it beside its opposite. Per case the effect is weaker but consistent: retention correlates positively with the discordant margin in 19 of 22 cells, 15 significant after correction, at a median $\rho$ of 0.109, and within the general-purpose encoders alone the direction holds ($\rho=0.679$) without reaching significance.

Erasure retention travels to fundus (Figure~\ref{fig:mechanism}b). Occluding an expert-annotated lesion on IDRiD leaves RETFound retaining 94.0 to 97.4 of its embedding, second only to the random anchor's 95.4 to 98.2 among ten encoders. On the evidence that defines diabetic retinopathy, a fundus foundation model moves almost as little as a random projection, while DINOv2-L, which never saw a retina, notices the lesion most at 73.6.

We detect no association between collapse and any other factor. The pretraining objective shows none, under the only test that isolates it: six encoders from one DINOv3 initialization on the same MIMIC-CXR images, at matched architecture, scale and compute and differing only in objective, give 44.6 [95\% CI: 32.5, 65.2] for image-text, 37.8 [30.6, 45.1] for self-supervised contrastive, and 32.9 [24.0, 44.5] for supervised, with intervals overlapping heavily at six cells per arm, so the contrast is underpowered for an effect this small. Parameter count shows none ($p=0.117$, $n=10$): with objective, data and architecture fixed, DINOv3 across four scales spans only 40.5 to 42.8, a twenty-nine-fold size increase moving collapse 2.3 points, while the 7M supervised DenseNet is second best on both readouts. We detect none with release year ($p=0.436$) or finding size (slope $0.041$, $p=0.479$, $n=132$): pneumothorax, small and subtle, collapses at 35.0, while lung opacity, large and diffuse, collapses at 44.9.

\begin{figure}[t]
\centering
\includegraphics[width=\columnwidth]{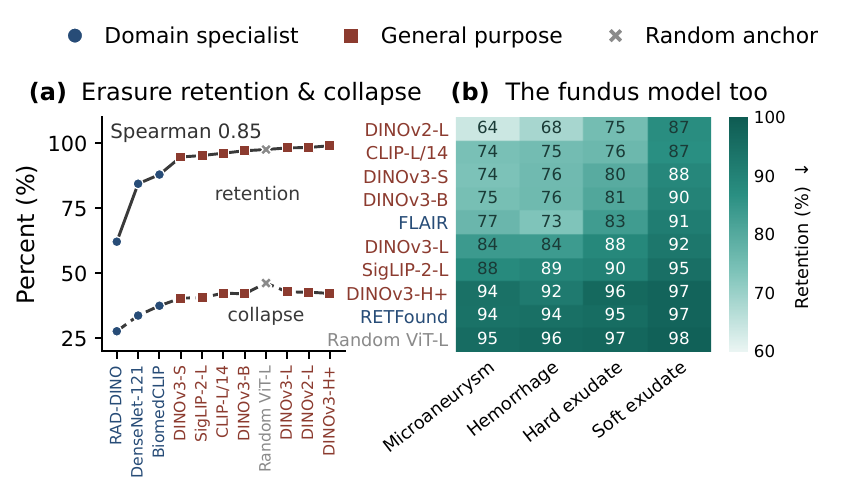}
\caption{\textbf{(a)} The chest encoders ordered by erasure retention, with collapse on the same axis. \textbf{(b)} Retention on IDRiD.}
\label{fig:mechanism}
\end{figure}

\subsection{A label-free flag does not beat confidence}
The neighborhood impurity score $r$ of Eq.~\ref{eq:impurity} is the natural label-free candidate: same geometry, no query label, one forward pass. It does not work. Against the head's own errors it reaches 52.2 AUROC on chest, pooled over six sites and 573 cells, and barely more on its calibration site (54.6 on MIMIC-CXR, 49.1 on VinDr-CXR). Confidence reaches 86.8 on the same cases, a paired gap of $-34.6$ points, significant in 545 of 573 cells, 7 favoring $r$. The deep ensemble reaches 75.9, $k$-nearest-neighbor distance 67.7, and Mahalanobis 65.9, and $r$ loses to each (gaps $-23.6$, $-15.5$, $-13.7$; significant in 538, 508, 501 cells). The pattern holds across the map (Table~\ref{tab:domains}): $r$ never beats confidence in any of the seven domains, and on birds falls below chance at 41.9. It beats the two geometric baselines on the four non-chest medical modalities by 2.8 to 16.4 points, the honest extent of its value. The failure shows that a score built from the neighborhood inherits the degeneracy it is trying to flag, uninformative exactly where it is needed.

\subsection{Ablation Studies}

\paragraph{(1) A planted encoder with known invariance} calibrates the operator against ground truth. We build 200 images differing only by a small localized token defining the class, and two encoders: one blurred to be invariant to the token, one that sees it. Lemma~\ref{lem:chance} predicts the invariant encoder lands at chance, and it does, at 49.5 with an interval containing 50.0. The token-sensitive encoder scores 0.0, the no-collapse floor; the Lemma does not predict that value, but the separation confirms the operator reads the invariance it should.

\paragraph{(2) A randomly initialized anchor} bounds the no-skill end on every modality, and is nearest chance on every chest readout (46.3 collapse, 56.3 task AUROC, 97.6 retention). It is carried in Table~\ref{tab:domains} because chance alone is not the whole reading: where low-level statistics already separate the classes, the anchor moves off chance too.

\paragraph{(3) Box-verified positives} remove label noise. On MS-CXR, whose boxes come from the reference pool itself, every encoder stays at or below chance and RAD-DINO reaches 16.7 against its 18.4 on the full positive set. On the RSNA arm the queries come from outside the pool, so the context match falls back across a domain shift and the level inflates for general-purpose encoders. The operator is a within-context measure and must be read as one.

\paragraph{(4) Removing the context match} tests whether stratification is doing the work. Pooling the banks across sites and views moves the rate by a few points, and the direction is finding-dependent: the matched rate is the lower one in 9 of 22 cells. The check is inconclusive; we report the matched rate as the conservative comparison, forbidding an encoder from winning on scanner or projection.

\paragraph{(5) The confidence baseline set collapses to one baseline.} Maximum softmax probability, maximum logit, and energy are reported as three independent detectors in the literature. For a per-target binary head they are monotone transforms of one logit, and the data confirm it: across all 573 chest cells the three agree to within $10^{-6}$ AUROC, and over the full 2{,}532 cells the first two agree to machine precision while energy departs by at most 0.005. We report them as one.

\section{Conclusion}
A measure of representation quality that depends on class prevalence is measuring prevalence. Equalizing the two reference banks costs nothing and buys an exact chance level, and once the estimator is calibrated the conclusion inverts: frozen encoders are not blind to fine-grained clinical findings, they encode them weakly. The distinction is not semantic. A blind encoder is hopeless; a weak one is a geometry with a measurable deficit, and we can say where it sits. It sits where the encoder is least sensitive to the evidence: the encoders that barely move when the annotated evidence is cut out place a positive beside its opposite, and no other factor we tested is associated with the failure.

Two limits bound the claim. The failure belongs to one encoder, not the panel, and the distance between what eleven heads jointly know and what any label-free rule we tested can extract is this study's clearest open problem. The analysis also lives entirely in the frozen regime, so it says nothing about what fine-tuning could recover. Closing that gap, by selecting an encoder per image or training one that must notice its own evidence, is where the next gain lies. Until then, a chance-calibrated measure will at least say where the ground is soft.

\section*{Acknowledgments}
STA is supported by the Excellence Strategy of the German Federal Government, the L\"ander, and RWTH ERS (START\_526-26). SN was supported by grants from the Deutsche Forschungsgemeinschaft (DFG) (NE 2136/3-1, LI 3893/6-1, TR 1700/7-1). DT was supported by grants from the DFG (NE 2136/3-1, LI 3893/6-1, TR 1700/7-1) and is supported by the German Federal Ministry of Education (TRANSFORM LIVER, 031L0312A; SWAG, 01KD2215B) and the European Union's Horizon Europe research and innovation programme (ODELIA, Open Consortium for Decentralized Medical Artificial Intelligence, 101057091).

\bibliography{paper}

\end{document}